\setlist[description]{font=\normalfont\itshape}
\newtheorem{theorem}{Theorem}
\newtheorem{proposition}[theorem]{Proposition}
\newtheorem{definition}[theorem]{Definition}
\newcommand{\Tgrm}[0]{T\textsubscript{GRM}\xspace}
\newcommand{\Tloom}[0]{T\textsubscript{LOOM}\xspace}
\title{Generalized Regressive Motion: a Visual Cue to Collision}
\author{Krzysztof Chalupka\\
Computation and Neural Systems\\
California Insistute of Technology\\
E-mail: kjchalup@caltech.edu
\And 
Michael Dickinson\\
Biology\\
California Institute of Technology
\And
Pietro Perona\\
Electrical Engineering\\
California Institute of Technology}
\begin{document}
\maketitle

\section*{Abstract}
Brains and sensory systems evolved to guide motion. Central to this task is controlling the approach to stationary obstacles and detecting moving organisms. Looming has been proposed as the main monocular visual cue for detecting the approach of other animals and avoiding collisions with stationary obstacles. Elegant neural mechanisms for looming detection have been found in the brain of insects and vertebrates. However, looming has not been analyzed in the context of collisions between two moving animals. We propose an alternative strategy, {\em generalized regressive motion} (GRM), which is consistent with recently observed behavior in fruit flies. Geometric analysis proves that GRM is a reliable cue to collision among conspecifics, whereas agent-based modeling suggests that GRM is a better cue than looming as a means to detect approach, prevent collisions and maintain mobility.

\section*{INTRODUCTION}
Animals move to forage, approach potential mates, chase prey, escape from predators, and to maintain their position within a group. They must do so in environments that are often cluttered by rocks, plants and other moving animals. Whether the goal is making or avoiding contact, it is valuable to detect the proximity of both stationary and moving entities.

Looming, defined by Gibson as a visual pattern expanding symmetrically on the retina~\citep{Schiff1962}, is commonly believed to be a robust and reliable monocular visual cue to impending collision\footnote{We will call `collision'  an event in  which one moving agent comes into physical contact with another, or with a stationary obstacle. In a biological context, collisions can be harmful (as when a prey is caught by a predator, or a pathogen passes from one animal to another), or beneficial (as when one ant exchanges chemical information with a nest mate, or a predator succeeds in capturing a prey). }. It is often understood as a loosely-defined group of visual stimuli rather than a specific mechanism, with an underlying idea that an object approaching with a constant velocity produces expanding patterns on the observer's retina. When an animal is stationary, looming is a sufficient cue to detect approaching objects. When a moving animal is on a collision course with a stationary obstacle, time-to-collision can be estimated from looming patterns even when distance is unknown~\citep{Lee1981,Wang1992}. It is generally accepted that looming is a cue used by various animals to avoid stationary obstacles, and elegant neural mechanisms for its detection have been unveiled. Experiments have revealed looming-sensitive neural pathways in many animals. The DCMD/LGMD neurons of the locust~\citep{Rind1992, Hatsopoulos1995, Gabbiani2002} as well as the pigeon nucleus rotundus~\citep{Sun1998} and the goldfish Mauthner cell~\citep{Preuss2006} respond to divergence of image edges. Finally, the fruitfly uses looming-sensitive neurons during navigation (\citet{Fotowat2009, Vries2012}; see also Discussion). 

\begin{figure}[t!]
\centering
\includegraphics[width=.45\textwidth]{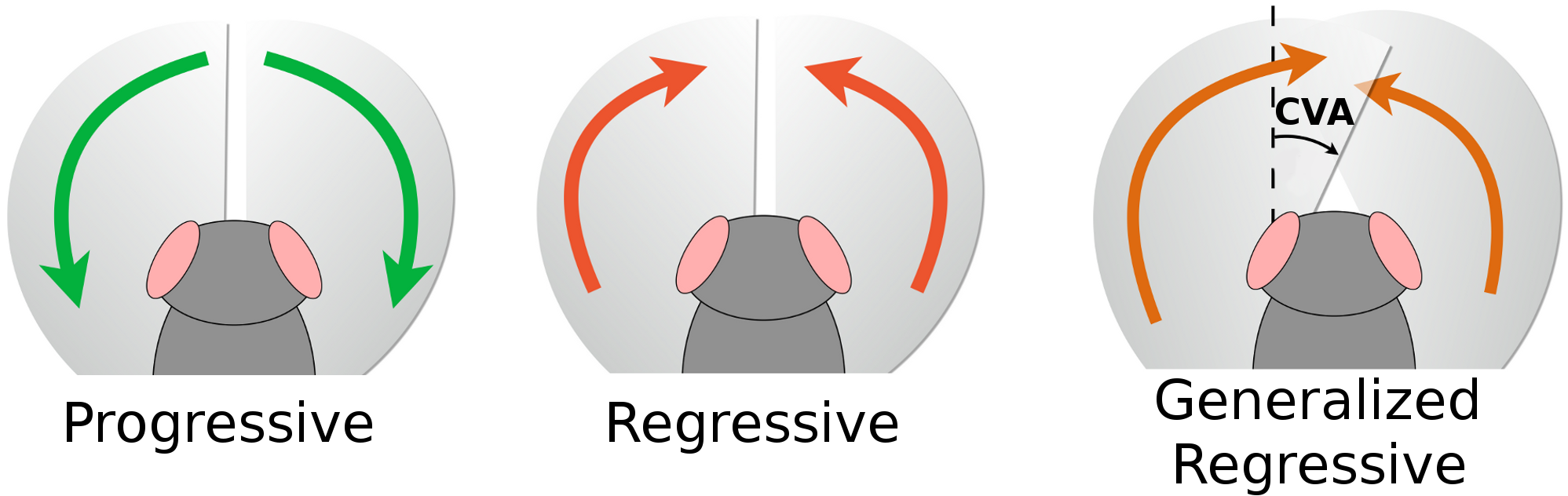}
\caption{Progressive motion, regressive motion, and GRM. Progressive motion is any counter-clockwise motion perceived in the left visual hemifield and any clockwise motion perceived in the right visual hemifield. Regressive motion is any clockwise motion in the left visual hemifield and counter-clockwise motion in the right visual hemifield. GRM is motion towards the nasal boundary of either eye. Its perception depends on the azimuthal position of the nasal boundary of the visual field of each eye. We call the angular distance of the nasal visual boundary from the 0$^\circ$ (straight-ahead) azimuth the Contralateral Visual Angle.}
\label{fig:GRM}
\end{figure}

Looming has been analyzed in the setting where either the animal or the obstacle is stationary. This is in contrast to regressive motion, used by the fruitfly to avoid collisions among multiple moving animals~\citep{Zabala2012}. Neural mechanisms behind regressive motion-driven behavior are unknown~\citep{Zabala2012}, but correlational motion detectors likely used by the fruitfly~\citep{Eichner2011, Takemura2013} can form a solid basis for regressive motion detection. The ecological usefulness of regressive motion has not been explored by~\citet{Zabala2012}. We build on the results of~\citeauthor{Zabala2012} by providing a theoretical and practical analysis of Generalized Regressive Motion (GRM) -- visual stimulus similar to regressive motion but simpler to detect and more versatile. Whereas regressive motion occurs when there is clockwise motion in the left visual hemifield or counter-clockwise motion in the right visual hemifield, GRM occurs if there is clockwise motion in the left eye or counter-clockwise motion in the right eye, as shown in Fig.~\ref{fig:GRM}. Experiments by~\citeauthor{Zabala2012} admit the hypothesis that the fruitfly uses GRM, not pure regressive motion, for stopping. We show that GRM enjoys the advantages of both looming motion and regressive motion. Our contribution is threefold:

\begin{compactitem}
\item Whereas regressive motion alone is not a good cue to frontal collisions, we use geometric reasoning to show that GRM is a sufficient cue to prevent collisions whether both agents move, or one is a stationary obstacle.
\item We argue that collisions ought not to be studied as an all-or nothing phenomenon. Rather the probability of avoiding collisions (here called `safety') is a more informative parameter. We point out that avoiding unwarranted stops is an equally important performance criterion, which we call `mobility'. 
\item With the help of agent-based modeling, we show that a population of Braitenberg-vehicle-like agents \citep{Braitenberg1986} using GRM as their sole collision-avoidance mechanism can be both safe from collisions and mobile when compared to looming-based agents.
\end{compactitem}

\section*{GEOMETRY OF REGRESSIVE MOTION}
Our geometric analysis of GRM is based on an abstract model of an agent: It is a point in the Euclidean plane, equipped with two ``eyes'' -- centers of projection. Each agent has a well-defined orientation, which allows us to define its Contralateral Visual Angle (CVA). The CVA is the angle subtended by the nasal boundary of each eye, as in Fig.~\ref{fig:GRM} (right). For now, we assume the distance between the eyes is zero and identify their position with the position of the agent (see Fig.~\ref{fig:generalizedRM}). This assumption is justified if the modeled animal's inter-eye distance is small compared to its typical distance from other animals. We nevertheless drop this assumption in simulations (described below), where we consider agents with two separate eyes and spatially extended bodies.

\begin{figure}
  \begin{center}
    \includegraphics[width=0.5\textwidth]{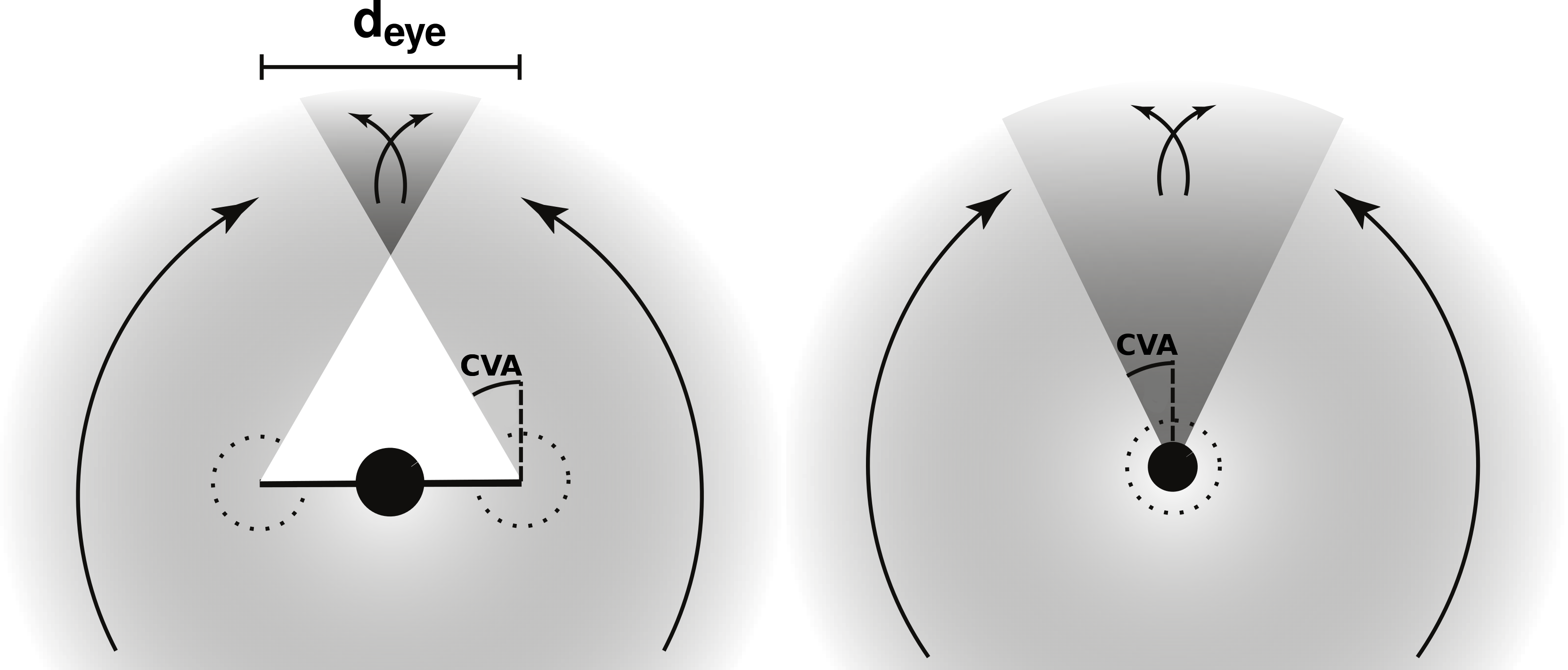}
  \end{center}
  \caption{A geometric-point agent using GRM. \textbf{Left} The arrows indicate directions of angular velocity on the agent's eyes that cause it to perceive GRM. $d_{eye}$ indicates the agent's inter-eye distance. \textbf{Right} If $d_{eye}=0$ the two eyes coincide, but still detect GRM independently. As a result, any movement in a cone extending from the agent's eyes forward is GRM.}
  \label{fig:generalizedRM}
\end{figure}

\subsection*{GRM detection can prevent all collisions among moving conspecifics}
Let a point (which could be stationary or in
movement) project to azimuthal position $\phi$ on the observer agent's eye. Let $\phi=0$ be the direction in front of the agent, positive angles for the left side and negative angles for the right side, and restrict $\pi\in [-\pi,\pi)$. Denote the point's angular velocity as $\dot{\phi}$.\\

\begin{definition} 
  A point projecting at $\phi$ is in
    \emph{regressive motion} with respect to the observer if $\dot{\phi} \cdot \phi \leq 0$, and in
    progressive motion otherwise.
\end{definition}

Before proving our main theorem, we state an easy propsition whose proof we relegate to the Mathematical Appendix.

\begin{proposition} 
  \label{prop:regMotionBasic}
  Let the relative position and velocity of the
  observed object be $\mathbf{x} \text{ and } \mathbf{v}$
  respectively. Then $\dot{\phi} = \frac{1}{\|\mathbf{x}\|^2}\langle \mathbf{v}^\perp,
  \mathbf{x} \rangle$. In particular, $\dot{\phi}$ scales as one over
  distance squared.
\end{proposition}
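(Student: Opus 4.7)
The plan is to compute $\dot\phi$ directly by writing $\mathbf{x}$ in polar form and differentiating. First I would set $r = \|\mathbf{x}\|$ and express the relative position as $\mathbf{x} = r(\cos\phi, \sin\phi)$, so that $\phi$ is the azimuthal projection angle of the observed point exactly as defined in the statement. Differentiating with respect to time, the product rule gives
\begin{equation*}
\mathbf{v} \;=\; \dot r\,(\cos\phi,\sin\phi) \;+\; r\,\dot\phi\,(-\sin\phi,\cos\phi),
\end{equation*}
which decomposes $\mathbf{v}$ into a radial component (along $\mathbf{x}/r$) and a tangential component (along $\mathbf{x}^{\perp}/r$) with signed magnitude $r\dot\phi$.

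Next, to isolate $\dot\phi$, I would take the inner product of both sides with the tangential unit vector $(-\sin\phi,\cos\phi) = \mathbf{x}^{\perp}/r$. The radial term vanishes by orthogonality, leaving $\langle \mathbf{v},\mathbf{x}^{\perp}/r\rangle = r\dot\phi$, hence
\begin{equation*}
\dot\phi \;=\; \frac{\langle \mathbf{v},\mathbf{x}^{\perp}\rangle}{r^{2}} \;=\; \frac{\langle \mathbf{v}^{\perp},\mathbf{x}\rangle}{\|\mathbf{x}\|^{2}},
\end{equation*}
where the last equality uses the antisymmetry of the 2D cross product, together with the sign convention the authors have adopted for $\perp$ (a clockwise 90$^{\circ}$ rotation, consistent with their sign convention that $\phi$ is positive on the left and negative on the right). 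The factor $1/\|\mathbf{x}\|^{2}$ is manifest in the formula, proving the final assertion.

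The only real subtlety is the sign bookkeeping: the numerator can be written equivalently as $\langle\mathbf{v},\mathbf{x}^{\perp}\rangle$ or $\langle\mathbf{v}^{\perp},\mathbf{x}\rangle$ up to an overall sign, and the chosen orientation must be consistent with the convention that progressive motion carries objects away from the midline (positive $\phi\cdot\dot\phi$). I would verify this by checking a single case, e.g.\ an object straight ahead ($\phi=0$, $\mathbf{x}=(r,0)$) drifting to the agent's left ($\mathbf{v}=(0,+1)$): the formula must yield $\dot\phi>0$, pinning down the sign convention for $\perp$ once and for all. Beyond this consistency check, the computation is purely mechanical, which is presumably why the authors relegated it to the appendix.
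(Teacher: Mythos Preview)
Your proof is correct, and it takes a somewhat different route from the paper's. The appendix works entirely in Cartesian coordinates: it writes $\tan\phi = y/x$ for $\mathbf{x}=(x,y)=\mathbf{x}_0+t\mathbf{v}$, differentiates both sides in $t$ to obtain $\dot\phi/\cos^{2}\phi$ on the left, then substitutes $\cos\phi = x/\|\mathbf{x}\|$ and simplifies algebraically to reach the stated formula. Your polar decomposition $\mathbf{v}=\dot r\,\hat{\mathbf r}+r\dot\phi\,\hat{\boldsymbol\phi}$ followed by projection onto $\hat{\boldsymbol\phi}$ is the more direct and coordinate-free route; it makes the geometric content---that $r\dot\phi$ is simply the tangential component of $\mathbf{v}$---visible without any algebra. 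The paper's version has one small side benefit: because it carries the explicit parametrization $\mathbf{x}_0+t\mathbf{v}$ through, it observes at the end that the numerator $\langle\mathbf{v}^{\perp},\mathbf{x}\rangle=vx_0-uy_0$ is constant in $t$, which is what actually justifies the ``scales as $1/\|\mathbf{x}\|^{2}$'' claim along a straight-line trajectory. In your argument that same fact follows immediately from $\langle\mathbf{v}^{\perp},\mathbf{v}\rangle=0$, so nothing is lost. Your handling of the sign convention (clockwise $\perp$) is consistent with the paper's.
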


\begin{figure}
  \begin{center}
    \includegraphics[width=0.5\textwidth]{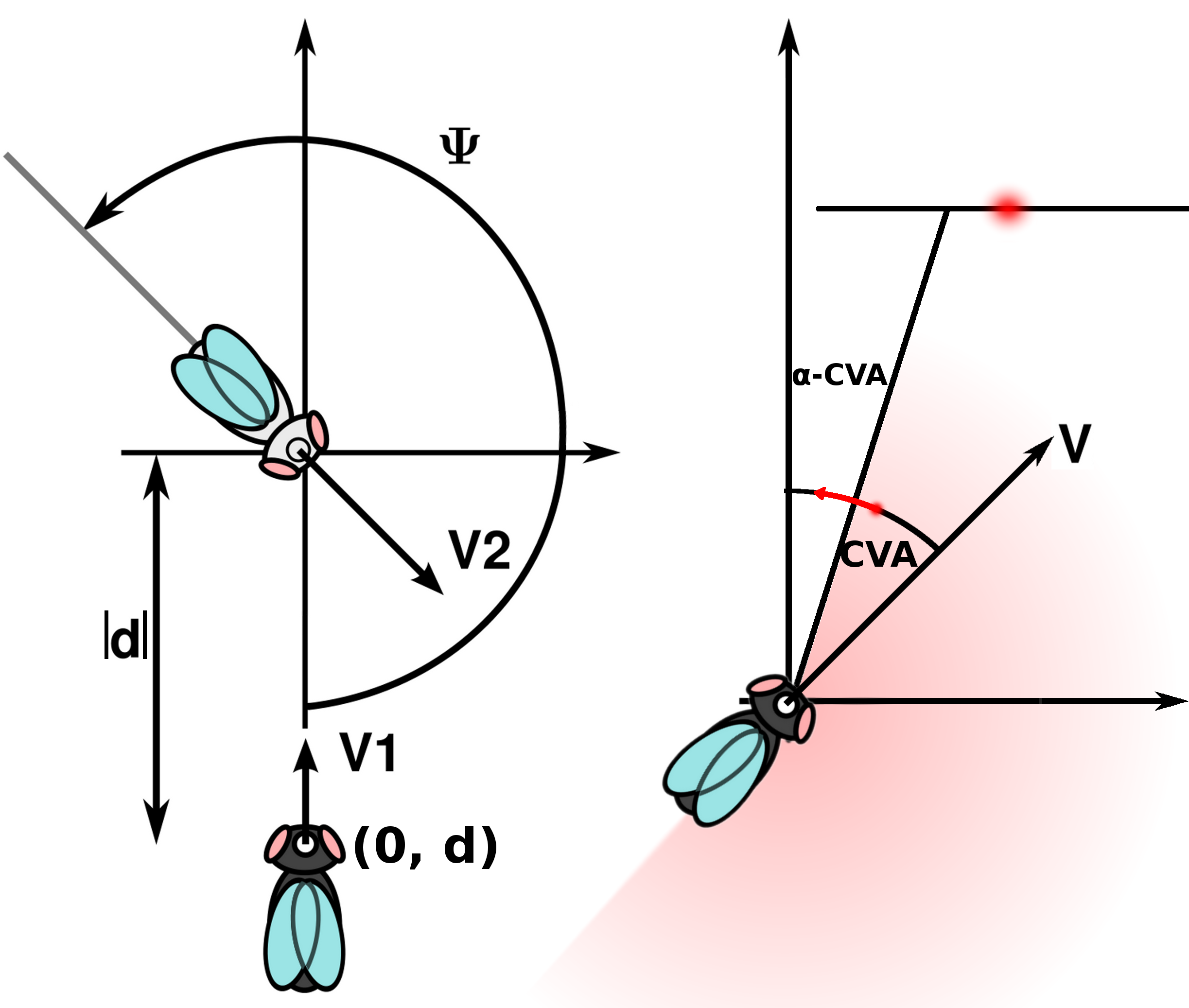}
  \end{center}
  \caption{Reference frames for Theorems~\ref{thm:GRM1}~and~\ref{thm:GRM2}. \textbf{Left} When two agents---schematically shown as flies---have non-parallel velocities, we can describe every possible pair of trajectories using four parameters (see text). \textbf{Right} When an agent with positive CVA approaches a flat wall sufficiently close, it can always observe GRM for some points on the wall (see text).}
  \label{fig:parametrized_approach}
\end{figure}

The following theorem ensures that before any potential collision, one of the agents will  perceive regressive motion.

\begin{theorem} Let f1 and f2 be two agents moving on straight, intersecting trajectories. If f1 reaches the trajectory intersection after f2, f1 perceives regressive motion at all times before f2 reaches the intersection and progressive motion afterwards. f2 perceives progressive motion before f1 reaches the intersection, and regressive motion afterward.
\label{thm:GRM1}
\end{theorem}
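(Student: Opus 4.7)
The approach is to pass to each observer's translating reference frame, where the other agent's position becomes an affine function of time, and then apply Proposition~\ref{prop:regMotionBasic}. In f1's frame (translating with f1, orientation fixed since f1 moves in a straight line), the relative position of f2 is $\mathbf{x}(t) = \mathbf{x}_0 + \mathbf{v} t$ with $\mathbf{v} = \mathbf{v}_2 - \mathbf{v}_1$. Proposition~\ref{prop:regMotionBasic} gives $\dot\phi_1 = \langle \mathbf{v}^\perp, \mathbf{x}(t)\rangle / \|\mathbf{x}(t)\|^2$; since $\mathbf{v}^\perp \perp \mathbf{v}$, the numerator $\langle \mathbf{v}^\perp, \mathbf{x}_0 + \mathbf{v} t\rangle = \langle \mathbf{v}^\perp, \mathbf{x}_0\rangle$ is constant in $t$, and nonzero because the relative straight-line trajectory avoids the origin (the hypothesis $t_1 \neq t_2$ rules out collision). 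Hence $\mathrm{sign}(\dot\phi_1)$ is fixed, so $\phi_1(t)$ is strictly monotonic; the same conclusion holds symmetrically for $\phi_2$ in f2's frame.

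With $\mathrm{sign}(\dot\phi)$ constant, the product $\phi\dot\phi$ can flip sign only where $\phi$ itself does, i.e.\ at $\phi=0$ or at the $\pm\pi$ wrap-around. I would next locate these events explicitly. At $t = t_2$, f2 sits at the intersection $P$ while f1 is still a distance $(t_1-t_2)\|\mathbf{v}_1\|$ short of $P$ along its own heading, so $\mathbf{x}(t_2) = (t_1-t_2)\mathbf{v}_1$ is parallel to $\hat{h}_1$, giving $\phi_1(t_2)=0$. Symmetrically, at $t = t_1$, f1 is at $P$ while f2 has overshot $P$ by $(t_1-t_2)\mathbf{v}_2$, so the vector from f2 to f1 is antiparallel to $\hat{h}_2$, giving $\phi_2(t_1) = \pm\pi$. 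Combined with monotonicity, this forces $\phi_1\dot\phi_1 \le 0$ for $t\le t_2$ (regressive) and $\ge 0$ for $t\ge t_2$ (progressive); the analogous argument at $t_1$ for $\phi_2$ yields the opposite flip (progressive before $t_1$, regressive after).

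The main obstacle is the asymmetry between the two transitions: f1's flip is a smooth crossing of its forward visual ray at $t_2$, while f2's flip occurs via the $\pm\pi$ discontinuity as f1 passes behind it at $t_1$. I would justify the asymmetry by noting that the direction of $\mathbf{x}$ sweeps monotonically from the $-\hat{\mathbf{v}}$ asymptote to the $+\hat{\mathbf{v}}$ asymptote, covering an open arc of length $\pi$; this arc contains $\phi_1 = 0$ in f1's frame (by the computation above) and therefore excludes $\pi$, whereas in f2's frame it contains $\pm\pi$ and excludes $0$. That geometric dichotomy — the later-arriving agent crosses the earlier one's forward gaze, while the earlier one passes behind the later — is the essential content of the theorem.
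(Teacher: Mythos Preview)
Your argument is correct and is a genuinely cleaner route than the paper's. Both proofs start from the same key observation, which drops out immediately from Proposition~\ref{prop:regMotionBasic}: since the relative position is affine in $t$ and $\mathbf{v}^\perp\perp\mathbf{v}$, the numerator $\langle\mathbf{v}^\perp,\mathbf{x}(t)\rangle$ is constant, so $\mathrm{sign}(\dot\phi)$ never changes. From there the paper and you diverge. The paper fixes explicit coordinates (aligning the $y$-axis with $\mathbf{v}_1$, parametrizing by $v^1,v^2,\Psi,d,\epsilon$), writes out closed-form expressions for $\phi_{21}$ and $\dot\phi_{21}$, and then does a case analysis on $\mathrm{sign}(\sin\Psi)$ together with the sign of the arctangent denominator to pin down when $\phi\in[-\pi,0]$ versus $[0,\pi]$. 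You instead evaluate the relative position at the two geometrically distinguished instants: at $t=t_2$ you get $\mathbf{x}(t_2)=(t_1-t_2)\mathbf{v}_1$, forcing $\phi_1(t_2)=0$; at $t=t_1$ you get the vector from f2 to f1 equal to $-(t_1-t_2)\mathbf{v}_2$, forcing $\phi_2(t_1)=\pm\pi$. Monotonicity of $\phi$ then locates the unique sign flip of $\phi\dot\phi$ without any trigonometric bookkeeping.

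What your approach buys is a coordinate-free explanation of the asymmetry in the statement: the swept arc of directions is an open semicircle between $\pm\hat{\mathbf{v}}$, so it contains exactly one of the antipodal pair $\{0,\pi\}$; your computations at $t_2$ and $t_1$ show that f1's arc contains $0$ while f2's contains $\pm\pi$, which is precisely the ``later-arriving agent sees the earlier one cross its forward ray, earlier-arriving agent sees the later one pass behind'' picture. The paper's proof gets the same conclusion but the geometric content is somewhat buried in the formula manipulation. One small point worth making explicit in a write-up: when you say ``monotonicity forces $\phi_1\dot\phi_1\le 0$ for $t\le t_2$,'' spell out the one-line case check (if $\dot\phi_1>0$ then $\phi_1<0$ before $t_2$, and symmetrically), since that is where the direction of the flip is actually determined.
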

\begin{proof}
Let two point-agents f1, f2 move on a flat uniform surface with constant
velocities on intersecting (that is, non-parallel) trajectories. Align the reference frame's y-axis with the
direction of f1's movement, and place the origin at the point at which
the agents' trajectories cross. The situation is fully described by four
parameters (see Fig. \ref{fig:parametrized_approach} Left):

\begin{description}
\item[$v^1,v^2$] -- the speed of f1 and f2 respectively,
\item[$\Psi$] -- the angle f2's velocity vector makes with f1's
  velocity (also called the \textit{angle of approach}), and
\item[$d$] -- the y-coordinate of f1 at the moment when f2 reaches the origin.
\end{description} 

  First, we compute the angular position $\phi_{21}$ of f2 on
  f1's projection center and the angular velocity $\dot{\phi}_{21}$,
  at the moment when f1 is at distance $d+\epsilon$ from $0$.

  For $\epsilon=0$, the positions of the two agents are
  respectively
  \begin{align*}
    \mathbf{x}^1(d) &= (0, d),\\
    \mathbf{x}^2(d) &= (0, 0).
  \end{align*}
  If $\epsilon \neq 0$, the time that passed since the original
  configuration is $\Delta t = \epsilon/
  v^1$, and since the velocity of
  f2 is $\mathbf{v}^2 = v^2(-\sin{\Psi}, \cos{\Psi})$, we have 
  \begin{align*}
    \mathbf{x}^1(d+\epsilon) &= (0, d+\epsilon),\\
    \mathbf{x}^2(d+\epsilon) &= \epsilon\frac{v^2}{v^1}(-\sin{\Psi}, \cos{\Psi}).
  \end{align*}
  Define $\mathbf{x}^R, \mathbf{v}^R$ to be the relative position and velocity of f2 in
  f1's frame of reference. Then
  \begin{align*}
    \mathbf{x}^R(d+\epsilon) &= \mathbf{x}^2(d+\epsilon) - \mathbf{x}^1(d+\epsilon)\\
    &= (-\frac{\epsilon v^2}{v^1}\sin{\Psi},
    \frac{\epsilon v^2}{v^1}\cos{\Psi}-(d+\epsilon)),\\
    \mathbf{v}^R(d+\epsilon) &= (-v^2\sin{\Psi}, v^2\cos{\Psi}-v^1).
  \end{align*}
  From this we can directly compute $\phi_{21}$, and Proposition 2
  enables us to compute the angular velocity of f2:
  \begin{align*}
    \phi_{21} &= \arctan{(x^R_2/x^R_1)}-\pi/2,\\
    \dot{\phi}_{21} &= \frac{1}{D^2}\langle \mathbf{v}^{R\perp}, \mathbf{x}^R\rangle,
  \end{align*}
  where $D^2=(x^R_1)^2+(x^R_2)^2$ is the distance between the two
  agents. Plugging in the values calculated above we get
  \begin{align}
    \phi_{21}(d,\epsilon) &=
    \arctan{\frac{\epsilon\frac{v^2}{v^1}\cos{\Psi}-(d+\epsilon)}{-\epsilon\frac{v^2}{v^1}\sin{\Psi}}}-\pi/2,\label{eq:theoryphi}\\
    \dot{\phi}_{21}(d,\epsilon) &= -\frac{1}{D^2}dv^2\sin{\Psi},\label{eq:theorydotphi}
  \end{align}
  with $D^2 =
  (\frac{v^2\epsilon}{v^1})^2+(d+\epsilon)^2-2\epsilon\frac{v^2(d+\epsilon)}{v^1}\cos{\Psi}.$
  Now, assume that f2 arrives at the intersection first, that is
  $d<0$. From Equation \ref{eq:theorydotphi} it follows that
  \[ \dot{\phi}_{21}(d,\epsilon) \geq 0 \iff \sin{\Psi} \geq 0. \]
  But this implies
  that the denominator in the arctangent in Equation \ref{eq:theoryphi}
  is nonnegative if and only if $\epsilon\leq0$,
  \[-\epsilon \frac{v_2}{v_1} \sin{\Psi} > 0 \iff \epsilon \leq 0,\]
  A positive denominator restrict the range of the arctangent to $[-\pi/2,
    \pi/2]$, and thus 
  \[\epsilon \leq 0 \iff \phi \in [-\pi, 0].\]

  Thus, 
  \[\dot{\phi}_{21}(d,\epsilon) \geq 0 \implies (\phi(d,\epsilon) \in [-\pi, 0] \iff \epsilon \leq 0).\]
  Analogously,
  \[\dot{\phi}_{21}(d,\epsilon) \leq 0 \implies (\phi(d,\epsilon) \in
        [0, \pi] \iff \epsilon \leq 0).\]
        This proves the first part of the theorem.
        The second part is proven exactly in the same way, but switching the reference frame to that of f2.
\end{proof}

The theorem easily generalizes to GRM:
\begin{definition} 
  Let f2 project onto a projection center f1, with
  azimuthal position $\phi_{21}$. f1 perceives GRM if and only if $\left(\dot{\phi}_{21}>0 \text{ and } \phi_{21} \in [-\pi, \text{CVA}]\right)$ or $\left(\dot{\phi}_{21}<0 \text{ and } \phi_{21} \in [-\text{CVA},
    \pi]\right)$, where CVA is a fixed angle.
\end{definition}

\begin{theorem} Let f1 and f2 be two agents moving on straight, intersecting trajectories. If f1 reaches the trajectory intersection after f2, f1 perceives GRM at all times before f2 reaches the intersection.\label{thm:GRM2}\end{theorem}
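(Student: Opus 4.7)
The plan is to obtain Theorem~\ref{thm:GRM2} as a near-immediate corollary of Theorem~\ref{thm:GRM1}. The key observation is that, for any $\text{CVA}\ge 0$ (which holds by definition, since the CVA is an angular distance), the GRM condition is logically weaker than the regressive-motion condition whenever $\dot\phi_{21}\neq 0$: if $\dot\phi_{21}>0$ and $\phi_{21}\le 0$, then $\phi_{21}\in[-\pi,0]\subseteq[-\pi,\text{CVA}]$, so GRM holds; the case $\dot\phi_{21}<0$, $\phi_{21}\ge 0$ is symmetric because $[0,\pi]\subseteq[-\text{CVA},\pi]$. In other words, widening the angular window by $\text{CVA}$ on each side only enlarges the set of $(\phi,\dot\phi)$ pairs that qualify as GRM relative to those that qualify as regressive motion.

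Given this containment, I would proceed in two short steps. First, invoke Theorem~\ref{thm:GRM1}: under the hypothesis that f1 reaches the intersection after f2, that theorem already gives f1 perceiving regressive motion at every time before f2 reaches the intersection, i.e.\ $\phi_{21}\,\dot\phi_{21}\le 0$ for all $\epsilon\le 0$. Second, verify that $\dot\phi_{21}$ is strictly nonzero throughout that interval; by Equation~\ref{eq:theorydotphi} we have $\dot\phi_{21}=-\frac{1}{D^2}\,d\,v^2\sin\Psi$, which is nonzero because $d<0$ strictly (f1 arrives after f2), $v^2>0$, and $\sin\Psi\neq 0$ (the trajectories are assumed intersecting, hence non-parallel). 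Combining these two facts with the containment established above yields GRM at every time before f2 reaches the intersection.

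I expect essentially no obstacle beyond bookkeeping; all the geometric content is inherited from Theorem~\ref{thm:GRM1}. The only point that requires care is the strict-inequality clause on $\dot\phi$ in the GRM definition, which is handled cleanly by the signed closed-form for $\dot\phi_{21}$ recalled above. Note that the statement here is actually weaker than Theorem~\ref{thm:GRM1} in that it asserts GRM only in the ``before'' direction; no additional analysis of the post-intersection phase is needed.
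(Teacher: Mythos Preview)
Your proposal is correct and matches the paper's own proof, which simply observes that Theorem~\ref{thm:GRM2} follows from Theorem~\ref{thm:GRM1} because regressive motion implies GRM by the definitions. Your additional care in verifying $\dot\phi_{21}\neq 0$ (to handle the strict inequality in the GRM definition) is a welcome bit of rigor that the paper glosses over.
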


This follows from Theorem~\ref{thm:GRM1} because regressive motion implies GRM (see definitions above).

\subsection*{GRM detection can prevent collisions with stationary objects}
An agent using solely non-generalized regressive motion detection will
always collide with stationary obstacles, which project expanding (progressive)
patterns. However, GRM with CVA $ > 0$ and appropriate motion thresholding provides a
mechanism for stationary collision avoidance. Let \Tgrm denote the smallest magnitude of GRM that causes the agent to stop. Intuitively, larger
CVA's and smaller \Tgrm provide better obstacle detection. 

\begin{theorem} An agent on a collision course with a stationary object will perceive GRM before the collision, as long as \Tgrm $< \infty$ and CVA $>$ 0.\label{thm:GRM3}
\end{theorem}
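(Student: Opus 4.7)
My plan is to reduce the statement to the concrete case of a flat wall (as suggested by Fig.~\ref{fig:parametrized_approach}, right), since any extended stationary obstacle that blocks the agent's straight-line path contains, locally, a surface patch that behaves like a wall near the point of impending collision. Set coordinates so the agent moves along the $+y$ axis at speed $v>0$ and the wall lies along $y=H$; denote the agent's position $(0,vt)$ and pick a wall point at lateral offset $-\epsilon$ (i.e.\ slightly to the left of the trajectory), $\epsilon>0$. The entire task is to show that, for every finite $T_{GRM}$ and every $\text{CVA}>0$, one can choose such an $\epsilon$ small enough that the leftward wall point produces GRM before $t=H/v$.

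The main computation is routine via Proposition~\ref{prop:regMotionBasic}. The relative position of the chosen wall point in the agent's frame is $\mathbf{x}^R(t)=(-\epsilon,\,H-vt)$ and the relative velocity is $\mathbf{v}^R=(0,-v)$; plugging into the proposition and using the paper's azimuth convention yields
\begin{align*}
\phi(t) &= \arctan\!\bigl(\epsilon/(H-vt)\bigr), \\
\dot\phi(t) &= \frac{v\,\epsilon}{\epsilon^{2}+(H-vt)^{2}}.
\end{align*}
Both are positive throughout the approach, placing the point in the branch $\dot\phi>0$, $\phi\in[-\pi,\text{CVA}]$ of the GRM definition as long as $\phi(t)\le\text{CVA}$.

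Next I would isolate the critical time $t^{\star}$ at which $\phi(t^{\star})=\text{CVA}$, namely $H-vt^{\star}=\epsilon\cot(\text{CVA})$. Because $\text{CVA}>0$, this $t^{\star}$ lies strictly before the collision time $H/v$ for every $\epsilon>0$. Substituting back into the expression for $\dot\phi$ gives
\[
\dot\phi(t^{\star}) \;=\; \frac{v\sin^{2}(\text{CVA})}{\epsilon},
\]
which can be made as large as desired by shrinking $\epsilon$. In particular, choosing any $\epsilon<v\sin^{2}(\text{CVA})/T_{GRM}$ ensures $\dot\phi(t^{\star})>T_{GRM}$; by continuity of $\phi$ and $\dot\phi$ in $t$, there is an open interval just before $t^{\star}$ on which $\phi<\text{CVA}$ \emph{and} $\dot\phi>T_{GRM}$ simultaneously, so GRM is detected strictly before the collision.

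There is no single hard step; the whole argument is a careful but elementary limit. The main thing to get right is the quantitative scaling $\dot\phi(t^{\star})\sim v/\epsilon$ together with the observation that $t^{\star}<H/v$ for every positive $\epsilon$, which is what simultaneously forces the finite-threshold and positive-CVA hypotheses to be used: if $\text{CVA}=0$ then $t^{\star}=H/v$ and the agent reaches the wall exactly when the optimal point first enters the GRM sector, while if $T_{GRM}=\infty$ then no finite value of $\dot\phi$ would ever trigger detection. Finally, I would remark that the same argument applies symmetrically to a wall point at $+\epsilon$ (producing the $\dot\phi<0$, $\phi\in[-\text{CVA},\pi]$ branch), and extends beyond walls to any obstacle with nontrivial lateral extent near the predicted impact point.
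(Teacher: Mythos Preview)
Your proof is correct and follows essentially the same strategy as the paper: reduce to a locally flat wall, pick an obstacle point close to the centerline, and use Proposition~\ref{prop:regMotionBasic} to show that its angular speed while still inside the GRM cone can be made to exceed any finite threshold before impact. The paper parameterizes slightly differently---it retains a general approach angle $\alpha$ and lets the wall distance $y\to 0$ to get a $1/y$ blow-up, whereas you specialize to perpendicular incidence and let the lateral offset $\epsilon\to 0$ to get the explicit $\dot\phi(t^{\star})=v\sin^{2}(\text{CVA})/\epsilon$ at the instant the point reaches azimuth CVA---but the underlying mechanism is identical, and your restriction to $\alpha=0$ is harmless since only a single obstacle point (not the wall's orientation) enters the estimate.
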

\begin{proof}
  We assume the agent is approaching an object such that the centerline of the agent $\phi=0$ does not point directly at a (non-smooth) corner of the object. 
  We can then assume there is a neighborhood around the $\phi=0$ aziumuth that can be approximated as a wall segment. As in Fig.~\ref{fig:parametrized_approach} (right), place the origin at the position of the agent as
  shown. The agent approaches the wall at an angle $0 < \alpha < \pi/2$ and with positive speed
  $v$, its velocity $\mathbf{v}=(v\sin{\alpha}, v\cos{\alpha})$. Consider an
  arbitrary wall-point $(x,y)$, marked red in the figure. The relative
  velocity of the point w. r. t. the agent is $-\mathbf{v}$, and its angular
  velocity on the agent's eye equals (by Proposition 2)
  \begin{equation}
    \dot{\phi}(x,y,\alpha,v)=\frac{-v}{x^2+y^2}(x\cos{\alpha}-y\sin{\alpha}).\label{eq:wall_angvel}
  \end{equation}
  We need to take into account very small $\theta$ and very large $T$. the
  following proposition implies the theorem:
  \begin{equation*}
    \forall_{\theta,T>0}\:\exists_{X>0} \text{ s.t. }
    \left\{\begin{array}{l}
      \|\dot{\phi}(X,y,\alpha,v)\|>T \text{  and  }\\
      y \tan{(\alpha-\theta)}>X>y\tan{\alpha}
    \end{array},\right.
  \end{equation*}
  where the last condition restricts the point we're searching for to be
  between the leftmost edge of perceived GRM and the center of the
  agent's visual field.
  Now, fix
  $X=y\tan{(\frac{\alpha-\theta}{k})}, k>1$. $k$ can always be chosen to make the point $(X,y)$ arbitrarily close to the centerline, and so contained in the smooth wall-like neighborhood on the obstacle. Clearly these $X,y$ satisfy the
  second condition above. We also have
  \[\dot{\phi}(X,y,\alpha,v)=\frac{-v\left(y\left[\tan{\frac{\alpha-\theta}{k}}\cos{\alpha}-\sin{\alpha}\right]\right)}{y^2(\tan{\frac{\alpha-\theta}{k}}+1)}.\]
  Since $\alpha,\; \theta$ and $k$ are constant, this expression scales as $1/y$ and
  thus reaches arbitrarily large magnitudes as $y$ approaches $0$ -- that is, as we place the observer closer and closer to the wall.
\end{proof}

The intuition behind this theorem is that a GRM detector detects any motion in a cone symmetrical about the center of the visual field. If the agent frontally approaches an object, the target is guaranteed to produce a strong enough signal within that cone at some positive time before the collision occurs.

\subsection*{False alarms, safety and mobility}
Theorems~\ref{thm:GRM2}~and~\ref{thm:GRM3} give basis to the claim that GRM can be useful for collision avoidance. However, it can be argued that they are of limited practical use. One problematic area not explored by the theorems is that of false alarms. An agent using GRM as a stopping cue can stop unnecessarily in a variety of situations, some of which are shown in Fig.~\ref{fig:falseAlarms}. 

\begin{figure}
\centering
\includegraphics[width=.4\textwidth]{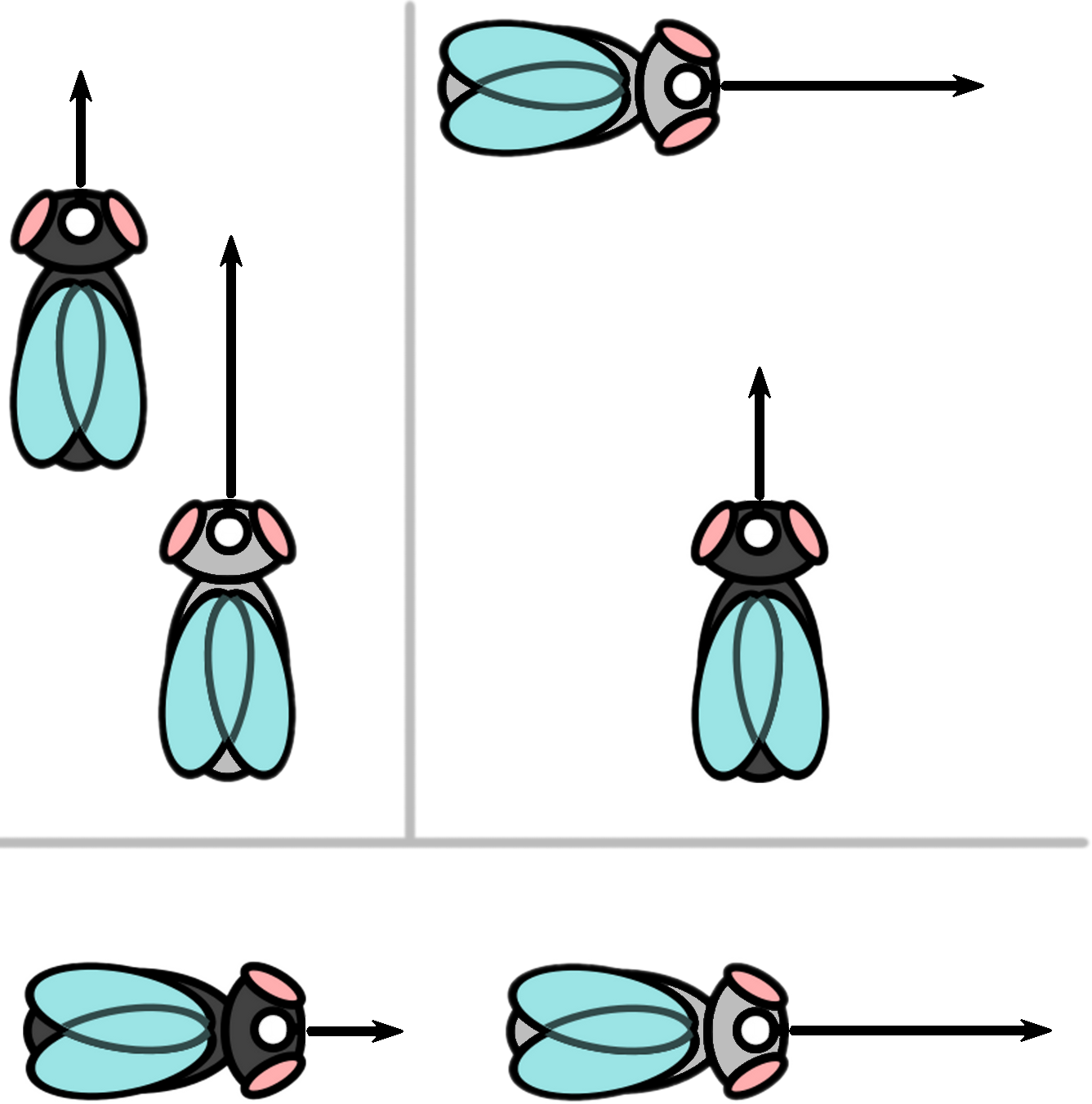}
\caption{GRM False Alarms. Arrow lengths indicate speeds. In each of the situations pictured collision is not imminent, but the darker agent perceives GRM. \textbf{Left} The faster, bright agent takes over a slower one from behind. \textbf{Right} The brighter agent crosses a trajectory junction in advance of the darker agent's arrival. \textbf{Bottom} The faster, dark agent is moving away from the slower bright agent.}
\label{fig:falseAlarms}
\end{figure}

If perception of GRM of any magnitude greater than zero caused the agents to stop, they would be perfectly safe from collisions, but unable to move. In simulations described below we investigate the role of two parameters that enable GRM-based agents to trade-off mobility and safety from collisions: the CVA and tne threshold \Tgrm on the magnitude of GRM that stops the agent. Varying these parameters in a population of agents changes the population's safety and mobility, where safety corresponds to the fraction of prevented collisions, and mobility to the fraction of useful stops. Formally, we can classify any encounter between agents f1 and f2 as
 \begin{description}
 \item[True Positive (TP)]: agent f1 stops due to perceived motion of
   agent f2, and \textit{f1 would collide with f2} had both f1 and f2 continued to move
     with velocities they had at the moment of f1's stop, and f2 is
     outside of f1's collision radius.
 \item[True Negative (TN)]: agent f1 moves without stopping and doesn't collide with any entity.
 \item[False Positive (FP)]: agent f1 stops due to perceived motion of
   entity f2, and \textit{f1 would not collide with f2} had both f1 and f2 continued to move at velocities they had at the moment of f1's stop. In addition, f2 is outside of f1's collision radius at the time of stopping.
 \item[False Negative (FN)]: agent f1 collides with agent f2.
 \end{description}

 We can then define mobility and safety as
 \begin{align*}
 \text{mobility} &= \frac{TP}{TP+FP} \text{ and}\\
 \text{safety}  &= \frac{TP}{TP+FN},
 \end{align*}
 Mobility is high if and only if the agent rarely stops without a good reason. Safety is a complementary measure that is high if and  only if the agent avoided many out of all the potential collisions. In our view, any collision avoidance algorithm is useful inasmuch as it offers a range of good mobility-safety tradeoffs: It can be used to make mobile vehicles remain relatively safe (relative to other algorithms), as well as safe vehicles that retain good mobility.

\section*{SIMULATIONS}
A number of issues are not covered by our theory:
\begin{itemize}
\item real agents have extended bodies, unlike the geometric points considered in Thm~\ref{thm:GRM2}, 
\item real agents have visual systems with multiple centers of projection and 
\item there is a tradeoff of safety and mobility to be explored.
\end{itemize}
Studying the tradeoff between mobility and safety is best done experimentally in a simulated environment. It is difficult to derive theoretical tradeoff curves given the statistical variability of the trajectories even in simple environments. Computational simulations are also a good tool for studying GRM agents with extended bodies and two separate eyes. Thus, to further study GRM we use agent-based modeling with populations of fly-like agents trying to avoid collisions. The agents use GRM- and looming-based algorithms for stopping. We compare the performance of GRM-based and looming-based collision avoidance using population safety and mobility as performance metrics. The details of the simulation setup are available in Methods below. Matlab code implementing the simulations is available online at \texttt{http://vision.caltech.edu/$\sim$kchalupk/cod} \texttt{e.html}.

\begin{figure}[h!t]
\centering
\includegraphics[width=.4\textwidth]{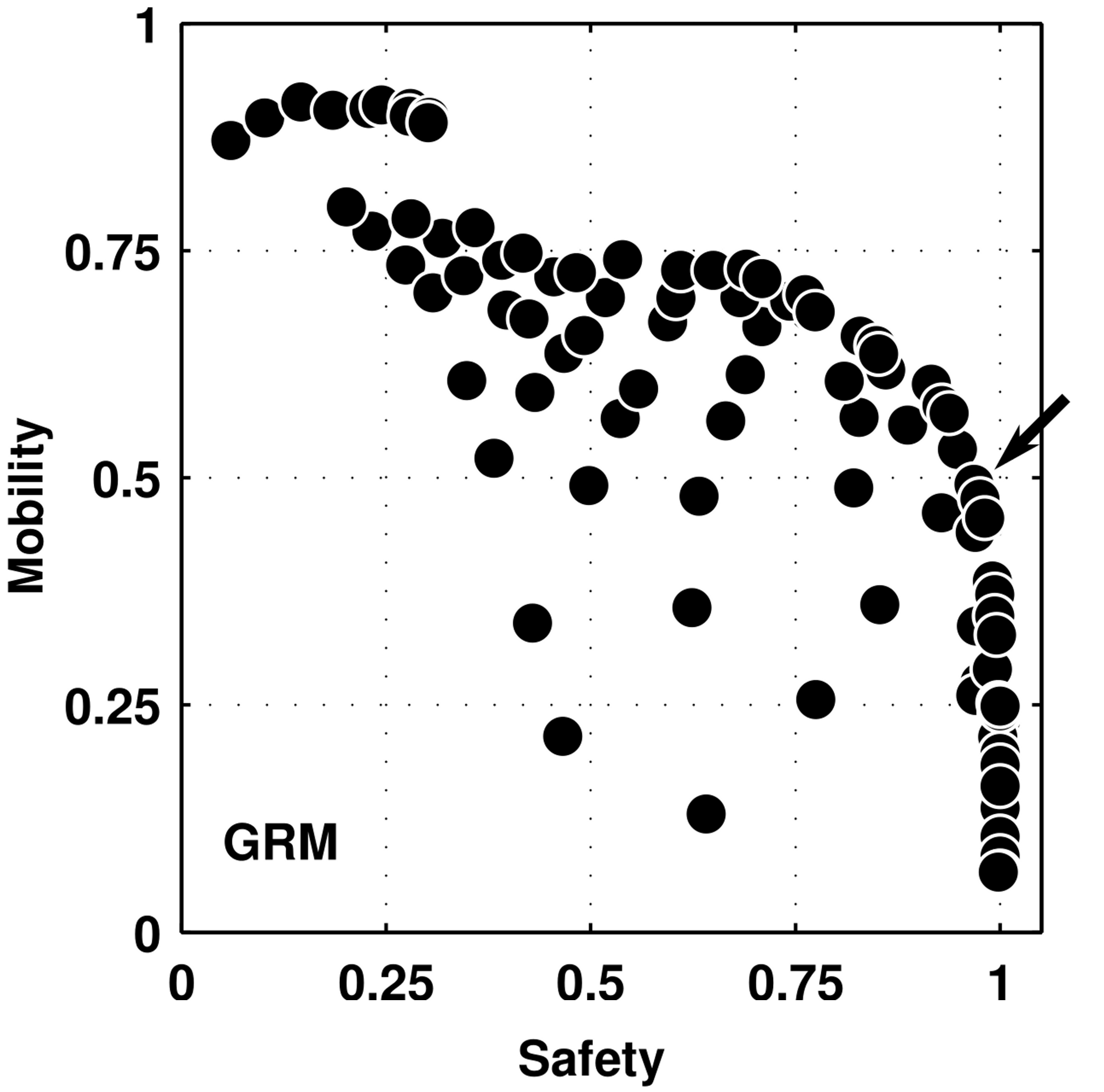}
\caption{Mobility and safety of GRM-based collision avoidance. Each point corresponds to the mean mobility and safety achieved by GRM-based agents with fixed \Tgrm and CVA (and \Tloom set to a very large value, disabling looming-based collision avoidance in practice; see Methods for more details). Each point corresponds to a different (\Tgrm, CVA) value, used in 50 randomized repetitions of 50s-long simulations to estimate the means. A wide variety of safety-mobility tradeoffs are available, including a reasonable 50\% mobility at 95\% safety marked by the arrow.}
\label{fig:results_grm}
\end{figure}

\subsection*{GRM detection offers good mobility and safety to a population of conspecifics}
In each simulation, ten fly-like agents were placed in a toroidal arena and followed straight trajectories with constant speed. Each agent was equipped with a stopping mechanism triggered by the perception of GRM with specific CVA and \Tgrm (consistent across all the agents in a given simulation run). We performed 100 types of simulations, varying the CVA and \Tgrm values systematically\footnote{The agents were also equipped with a looming detector, which in the simulations described in this section was set to be extremely insensitve.}. After running multiple trials for each simulation type, we calculated the safety and mobility of the agent population in each case. Fig.~\ref{fig:results_grm} shows that varying the GRM parameters offers a wide variety of safety-mobility tradeoffs to the population.

\subsection*{Looming detection offers poor mobility and safety to a population of conspecifics}
As a point of reference we measured the usefulness of looming as a signal for collision avoidance. To this goal, we equipped each agent with both a GRM and a looming detector. We then performed a series of simulations varying CVA, \Tgrm  and \Tloom, where the latter is a looming threshold (Methods contains a detailed description of the stopping mechanism). Fig.~\ref{fig:results_triplet} (left) shows safety and mobility in the simulations where only the looming signal was used for stopping (that is, \Tgrm was very high). The figure shows that to achieve 95\% safety, the agents had to stop unnecessarily 75\% of the time. Fig.~\ref{fig:results_triplet} (middle, right) shows full simulation data: each marker corresponds to one (CVA, \Tgrm, \Tloom) setting, and the three parameters vary independently. In Fig.~\ref{fig:results_triplet} (middle) the value of \Tgrm varies smoothly on the upper envelope of the scatter plot. It is very clear that choosing low \Tgrm offers good safety but bad mobility, whereas higher \Tgrm increases safety but decreases mobility. Fig.~\ref{fig:results_triplet} (right) shows that the correspondence between \Tloom and the safety-mobility tradeoff is much less clear. Whereas low values of \Tloom decrease mobility of the agents, increasing the value past a certain point offers no additional safety-mobility advantages when GRM is also used for collision avoidance.

\begin{figure*}
\centering
\includegraphics[width=1.\textwidth]{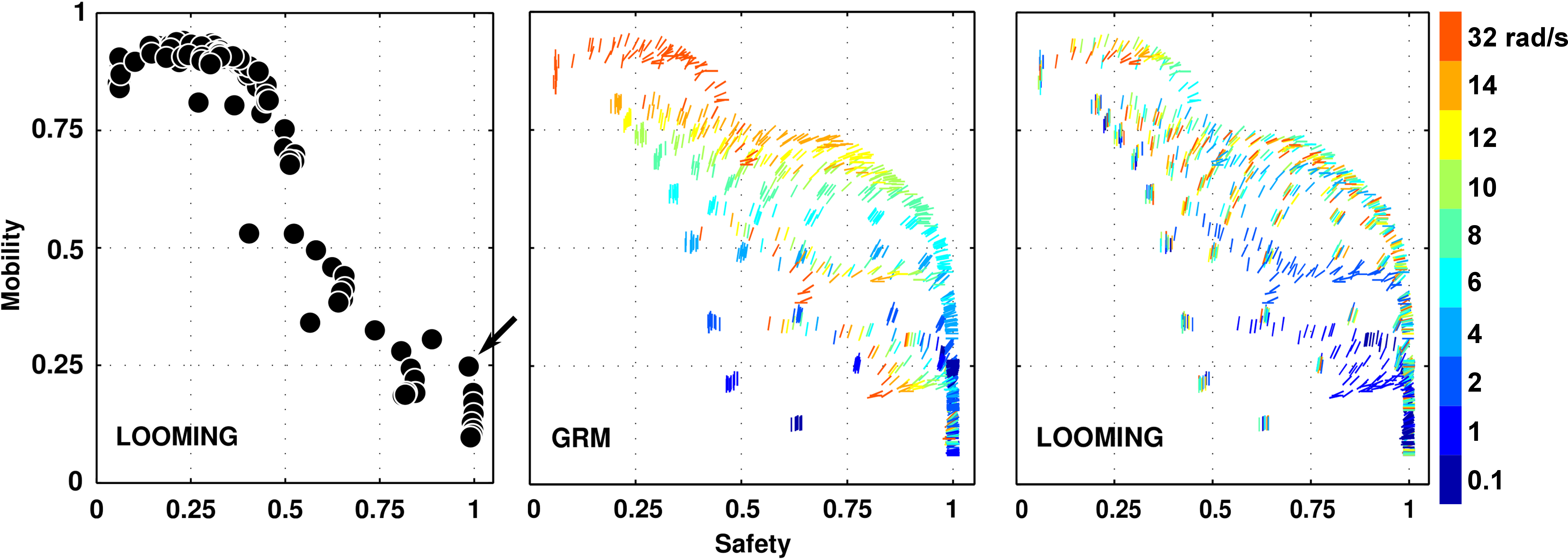}
\caption{Mobility and safety of GRM- and looming-based collision avoidance. Each point corresponds to a different (\Tgrm, \Tloom, CVA) value, used in 50 randomized repetitions of 50s-long simulations to estimate the means. Its position is at the mean safety and mobility of a population of agents over the 50 trials. \textbf{Left} Only points with \Tgrm = 32rad/s (that is, GRM detection practically disabled) are shown. The arrow points to the highest-mobility point with more than 95\% safety. \textbf{Middle} Points for all the possible (\Tgrm, \Tloom, CVA) settings. The angle of each bar is the CVA used in corresponding simulations, while the bar's color equals \Tgrm. \textbf{Right} As Middle, but the colors now correspond to \Tloom. For example, the leftmost top point corresponds to CVA=0, \Tgrm=32, \Tloom=4. This point has high mobility and low safety. This is because its low CVA and high \Tgrm can't prevent many collisions; at the same time, relatively low \Tloom doesn't seem to help much with collision avoidance.}
\label{fig:results_triplet}
\end{figure*}

\section*{METHODS: SIMULATION PARAMETERS}
 \label{sec:fly_navigation}
Each simulation puts ten fly-like agents, each defined by 14 visible points (see Fig. \ref{fig:fly_outline}), in a toroidal arena --- a square of side length 50mm with opposing edges glued together. 

\afterpage{
\begin{figure}
\centering
\includegraphics[width=.4\textwidth]{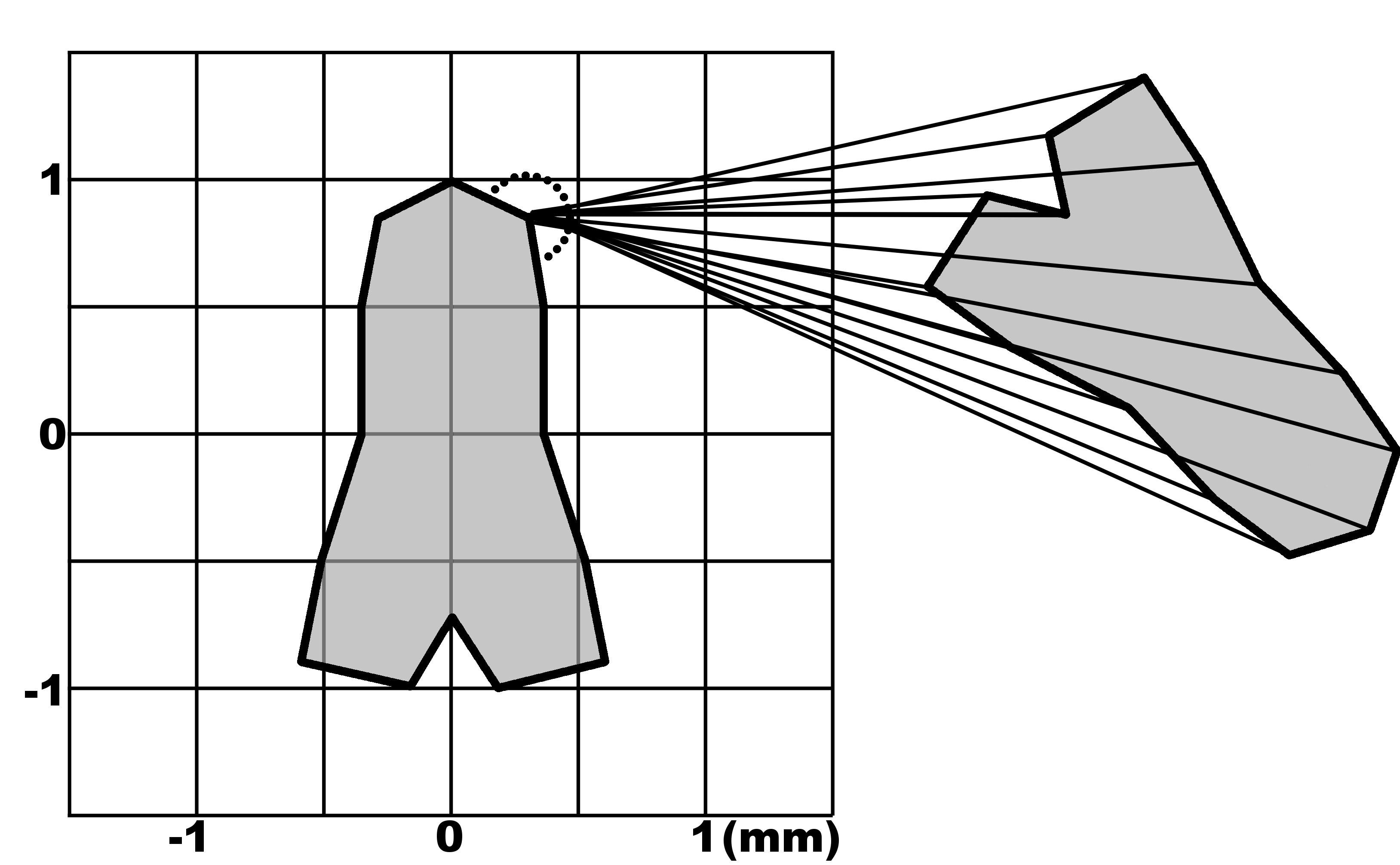}
\caption{Outline of a simulated fly agent. Each agent in our simulations is defined by 14 points, as shown in the figure. The agent's two eyes (only the right eye is shown) compute at each time step the angular projections and velocities of the 14 defining points of each other visible agent. }
\label{fig:fly_outline}
\end{figure}
}

The flies follow simple dynamics, described below. For each combination of $\text{CVA} \in \{0,10,...,90\}$ degrees and \Tgrm, \Tloom $\in \{0.1, 1, 2, 4, 6, 8, 10, 12, 14, 32\}$ rad/s, we ran 50 trials of 10000 time-steps (to a total of $1000\times 50$ simulations, 50s long each, of flies walking at realistic speeds). Movies S1-S4, available online at \texttt{http://vision.caltech.edu/$\sim$kchalupk/cod} \texttt{e.html}, show four example simulation runs resulting from different parameter settings (Movie Captions Appendix describes symbols used in the movies.)

 \begin{table*}[!ht]
   \centering
   \caption{{\bf Parameters defining the fly model.}}
   \label{tab:parameters}
   \begin{tabular}{ccl}
     \toprule
     \textbf{Symbol} & \textbf{Default value} & \textbf{Meaning}\\  
     \midrule
     $\Delta t$ & 0.005 s & Time-step in numerical simulations\\ 
     $R$ & 50 mm & Edge length of the walking arena (glued into a torus)\\ 
     $N$ & 10 & Number of flies in the arena\\ 
     $l$ & 2 mm & Length of an agent\\ 
     $d$ & 0.55 mm & distance between the eyes' centers\\ 
     $v_{\min}$ &10 mm/s & minimum agent speed \\ 
     $v_{\max}$ & 30 mm/s & maximum agent speed\\ 
     $v_i$ & $ \sim U(v_{\min}, v_{\max})$ & walking velocity of agent $i$\\ 
     $p_{01}$ & $\sim $0.8 & prob. stop-to-walk in a 1s time-interval\\ 
     $P_{01}$ & 0.008 & prob. stop-to-walk in one time-interval (i.e. $(1-p_{01})^{\Delta t} = (1-P_{01})$)\\ 
     \Tloom& 0$^\circ/s$ ($\dag$)& stopping threshold on looming motion\\
     \Tgrm& 0$^\circ/s$ ($\dag$)& stopping threshold on regressive motion\\ 
     $\text{CVA}$ & 30$^\circ$ ($\dag$) & the CVA\\ 
     $\theta_i$ & 120$^\circ$ & angle of ipsilateral visual field seen by each eye\\ 
     $\delta_\sigma$ & 30$^\circ$ &increment of standard deviation of agent reorientation motions $\sigma_{\vec{v}}$\\ 
     $\lambda_\sigma$ & 0.992 & decay constant for $\sigma_{\vec{v}}$ at each $\Delta t$ \\ 
     $n$ & 14 & number of points on each agent \\  
     &($\dag$) & variable whose value is systematically explored in some experiments\\
     \bottomrule
   \end{tabular}
 \end{table*}

 \begin{description}
 \item[Numerical implementation:] the motion and control of each agent
   are computed at discrete time-intervals with constant time
   increments of $\Delta t$. For simplicity of notation where we write
   $t+1$ in the following, we mean $t + \Delta t$. The value of $\Delta t$  is given
   in Table~\ref{tab:parameters} alongside all other simulation parameters. 
  
 \item[Agent trajectories:] the $i$-th agent's trajectory is determined by
   walking speed $v_i$, initial position $\vec{x}_i^0$ and initial
   direction $\vec{v}_i$ with $\| \vec{v}_i\| = 1$, i.e.  $\vec{x}_i(t)
   = \vec{x}_i^0 + t v_i \vec{v}_i$. Each agent has a different constant velocity chosen uniformly from the [1-3] cm/s interval.

 \item[Control:] while it is walking, each agent keeps walking at constant
   velocity until it is stopped by a GRM or looming percept caused by another agent (as explained below).

 \item[Spontaneous start:] when an agent is stationary it flips a coin at
   each time step: if the coin turns out to be heads, the agent
   starts moving in the current direction at its preferred
   velocity. If the coin is tails it stays put. The probability of
   obtaining heads in the interval of one second is denoted $p_{01}$;
   thus, the probability of obtaining tails in one time-step is $(1 -
   p_{01})^{\Delta t}$.

 \item[Orientation:] while an agent is walking, it keeps constant velocity and orientation
   (i.e. $\vec{v}_i (t+1) = \vec{v}_i(t)$). Upon stopping it samples a new orientation from a
   Gaussian pdf with standard deviation equal to the current value of a parameter $\sigma_i$ and centered at the current orientation. In addition, after reorientation, $\sigma_i$ increases by a fixed amount
   $\delta_\sigma$.  $\sigma_i$  decays exponentially, with decay constant $\lambda_\sigma$, i.e. $\sigma_i(t+k\Delta t) =
   \sigma_i(t)\lambda_\sigma^k$. This mechanism, modeling basic
   neuronal sensitization, is a simple way to allow the flies to keep
   roughly straight trajectories when encountering transient obstacles
   (other moving flies), and avoid getting stuck around large static
   obstacles (groups of stopped flies). 
  
 \item[GRM:] each eye sees GRM whenever the angular motion of any point projecting onto its retina is directed contralaterally, i.e. counterclockwise for the right eye and
   clockwise for the left eye. Each eye's visual field goes beyond the
   frontal direction to cover a given CVA. The CVA is a
   free parameter which we study to discover the best compromise between
   avoiding collisions and false alarms.

 \item[GRM stops:] agent boundaries are defined by 14 points visible to
   other flies, as shown in Fig. \ref{fig:fly_outline}. Call the $j$-th point on the $i$-th agent $p_j^i$ and its azimuthal position on the observer's eye $\phi(p_j^i)$. Each agent
   measures the angular velocity of all the points on the other flies. If {\em any} GRM $\dot{\phi}(p_j^i)$ is detected, the agent compares the GRM magnitude $\|\dot{\phi}(p_j^i)\|$ to a threshold \Tgrm and stops if $\|\dot{\phi}(p_j^i)\| >$ \Tgrm.

 \item[Looming motion:] Let $\|\dot{\phi}_L\|$ denote the  largest magnitude of counter-clockwise motion that any point evokes in the left visual hemifield, and  $\|\dot{\phi}_R\|$ the largest magnitude of clockwise motion evoked in the right visual hemifield. Then the strength of looming perceived by the agent equals $\omega_{LOOM} := \min{(\|\dot{\phi}_L\|,\|\dot{\phi}_R\|)}$. 

 \item[Looming stops:] similarly to GRM stopping, the agent stops if $\omega_{LOOM} >\;\; $ \Tloom. This simple mechanism activates only if the agent can perceive points diverging at velocities larger than \Tloom.
 \end{description}

 All simulation parameter values are specified in Table~\ref{tab:parameters}. To summarize, agent $i$'s motion is governed by the following equations
 (see also the diagram in Fig. \ref{fig:fly_control}).

\begin{figure}
\centering
\includegraphics[width=.4\textwidth]{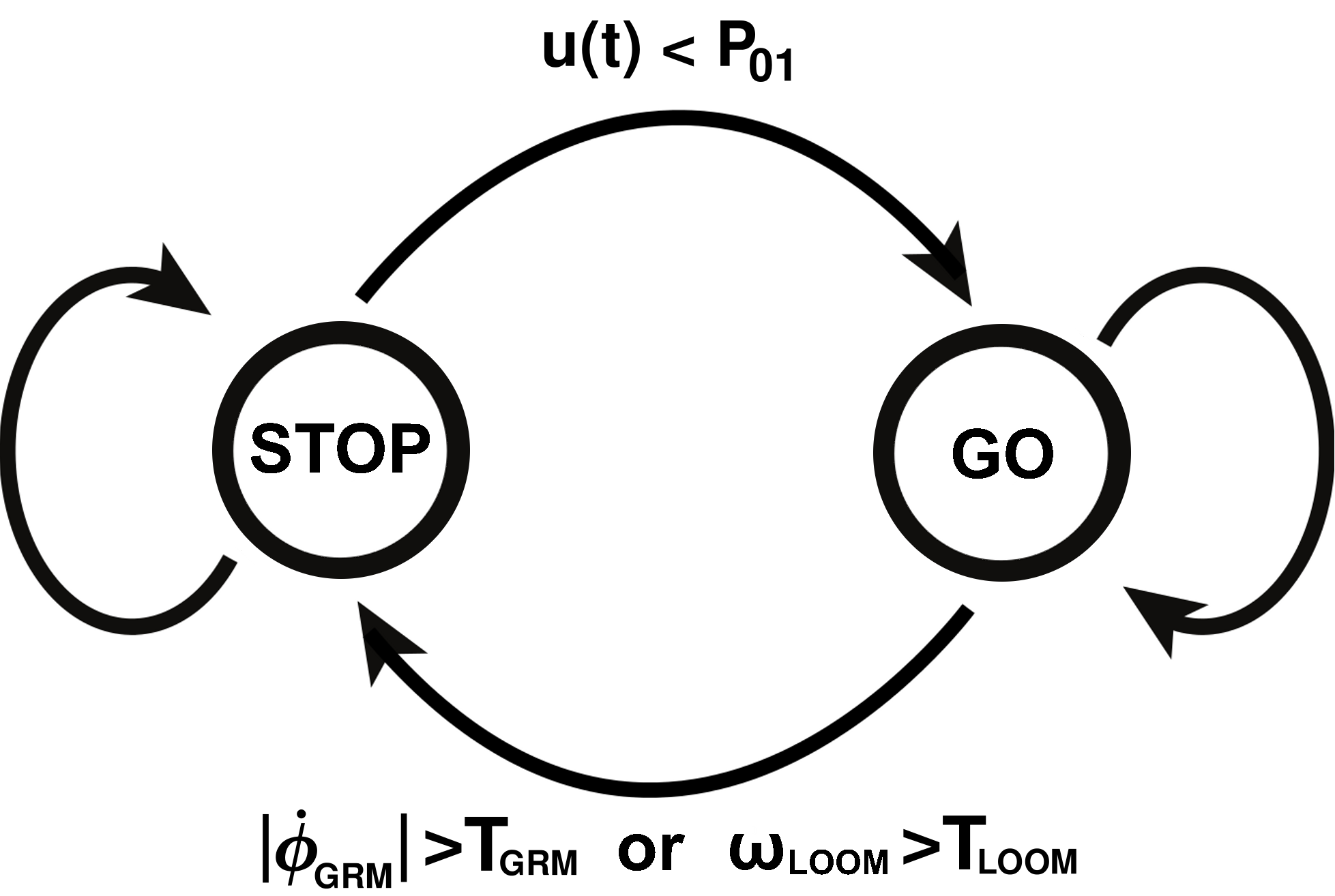}
\caption{Simulated agents, a control diagram. Each fly-like agent keeps following a straight trajectory (see text for details) until either regressive motion or looming expansion on its retina exceeds a fixed threshold. The agent then stops until a Bernoulli coin (tossed on each time step) shows ``heads''.}
\label{fig:fly_control}
\end{figure}

 In order to simplify the notation we omit index $i$ unless necessary, and write out the equations for a one-eyed agent; since both looming and GRM are monocular cues in our implementation, the extension to the two-eyed agent is trivial.

 \begin{description}

 \item[Control:] Let $z \in \{0, 1\}$ be the variable denoting whether
   an agent is stationary ($z=0$) or in motion ($z=1$). Let $u(t)$ be an
   i.i.d. random process with uniform probability density on $(0,
   1)$. Let $\|\dot{\phi}_{GRM}(t)\|$ be the largest observed magnitude of GRM at time t. Let $\omega_{LOOM}(t) := \min{(\|\dot{\phi}_L(t)\|,\|\dot{\phi}_R(t)\|)}$ denote the looming strength observed at time t.

   \begin{align*}
     z(0) &= \begin{array}{l l} 1 & \quad \text{ initially the agent is set in motion } \end{array}\\
     z(t+1) & = \left\{
       \begin{array}{l l} 0 & \quad\text{ if }  z(t)=1 \text{ and}\\
                            & \quad\phantom{W}(\|\dot{\phi}_{GRM}(t)\| >  \text{\Tgrm or}\\
                            & \quad\phantom{W} \omega_{LOOM}(t) > \text{\Tloom})\\
                          1 & \quad \text{ if } z(t)=0\text{ and}\\ 
                            & \quad\phantom{W}\|\dot{\phi}_{GRM}(t)\| < \text{\Tgrm and}\\
                            & \quad\phantom{W} \omega_{LOOM}(t) < \text{\Tloom and }\\
                            & \quad\phantom{W} u(t) < P_{01} \\ 
                       z(t) & \quad\phantom{W} \text{ otherwise}
       \end{array} \right.
   \end{align*}

 \item[Trajectory:] Call $\vec{v}$ the direction vector, i.e. $\|
   \vec{v} \| =1$. With a slight abuse of notation use $\vec{v}$ also
   for the angle of $\vec{v}$, i.e. write $\vec{v} \sim G( \cdot \,; \,
   \mu, \sigma)$ to indicate that the angle of $\vec{v}$ is drawn from
   a given Gaussian density~mod~$2 \pi$.
   \begin{align*}
     x(0) &\sim  U(A)\footnote{}\\ 
     v(0) &\sim U(v_{\min}, v_{\max})\\ 
     \vec{v}(0) & \sim U(0, 2 \pi)\\ 
     x(t+ 1) &= x(t) + v(t)\vec{v}(t) \Delta t\\ 
     v(t+ 1) &= z(t) v\\ 
     \vec{v}(t+1) &\sim \left\{
       \begin{array}{l} 
         G(\cdot \,; \, \vec{v}(t), \sigma_{\vec{v}}(t))\\
           \quad\text{if $z(t)-z(t+1) = 1$}\\ 
           \vec{v}(t) \text{otherwise}
       \end{array}
     \right.
   \end{align*}
   \footnotetext{$A$ is the surface of the walking arena. We re-draw the initial positions so that the flies don't overlap at t=0.}

 \item[Direction change standard deviation:]
   \begin{eqnarray*}
     \sigma_{\vec{v}}(0) & = & 0\\ \sigma_{\vec{v}}(t+1) & = &
     \lambda_\sigma \sigma_{\vec{v}}(t) + \delta_\sigma (1 - z(t))z(t-1) \\
   \end{eqnarray*}
 \end{description}

\section*{DISCUSSION}
Our results show that GRM constitutes a good cue for collision avoidance. Extending the analysis of \citet{Zabala2012}, we showed both mathematically and in simulations that increasing the CVA to a non-zero value improves collision avoidance. In particular, it allows the detection of both stationary and moving objects on a collision course. In this respect, GRM can be viewed as a computationally efficient way to connect looming detection with regressive motion detection. We introduced safety and mobility as collision avoidance performance metrics and used looming as a reference point to show that GRM is a better cue for collision avoidance among conspecifics.

Whereas perception of regressive motion appears to influence the behavior of the fruit fly, the neural circuitry participating in this perception-action loop remains unknown. \citet{Fotowat2009} and \citet{Vries2012} showed that looming-sensitive neurons in \textit{Drosophila} participate in a neural pathway that mediates escape behavior. However, escape is not the same as collision avoidance. We showed that looming might not be a practical collision-avoiding solution for groups of interacting animals as it can overly impede the mobility of a group. We argue that further research into the neural circuitry of GRM-based action in animals is an important future direction.

One reason why regerssive motion has remained a relatively obscure phenomenon might be the often overlooked difference between static and dynamic environments when testing collision avoidance algorithms. For example, \citet{Blanchard2000} constructed a robot guided by responses mimicking those of the locust looming-detection neurons. However, the robot's collision avoidance was only tested in an environment consisting of stationary obstacles. We have shown that in a more interactive environment, GRM has significant advantages over looming.

From the point of view of control and robotics, we studied what \citet{Berg2011} call \textit{reciprocal collision avoidance}: avoiding crashes within a population of simple non-communicating robots all of which implement the same movement protocol. \citeauthor{Berg2011} derive their elegant algorithm from first principles. Ours is biologically inspired and relies on a very simple visual cue that can be computed from the optical flow using Reichardt-detector-like circuits~\citep{Reichardt1961}. While our simulations used agents that can immediately measure the position and velocity of any point on their retina, optic-flow-based motion detection has been successfully implemented on minimalistic hardware~\citep{Barrows2002, Beyeler2009}. We see GRM's largest potential in control of small swarming or flocking vehicles~\citep{Kushleyev2013, Viragh2014}. More sophisticated algorithms for robot navigation are often based on Probabilistic Roadmaps~\citep{Kavraki1996, Boor1999, Karaman2011} or Rapidly Exploring Random Trees~\citep{LaValle1998,Petti2005, Kuwata2009, Karaman2011}. Such approaches allow a robot to avoid obstacles with lower failure rates than GRM, but require large computational power. Importance of low-complexity collision avoidance grows as fields such as drone flight control are rapidly developing~\citep{Pines2006, Lentink2014}. Alternatively, GRM could serve as a basis for a fallback collision prevention system for larger vehicles operating in highly dynamic environments.

\section*{Acknowledgements}
This work was supported by National Science Foundation grants 0914783 and 1216045, NASA Stennis grant NAS7.03001 and ONR MURI grant N00014-10-1-0933.

\bibliographystyle{plainnat}
\bibliography{GRM}

\begin{thebibliography}{29}
\providecommand{\natexlab}[1]{#1}
\providecommand{\url}[1]{\texttt{#1}}
\expandafter\ifx\csname urlstyle\endcsname\relax
  \providecommand{\doi}[1]{doi: #1}\else
  \providecommand{\doi}{doi: \begingroup \urlstyle{rm}\Url}\fi

\bibitem[Barrows et~al.(2002)Barrows, Chahl, and Srinivasan]{Barrows2002}
G.~L. Barrows, J.~S. Chahl, and M.~V. Srinivasan.
\newblock {Biomimetic visual sensing and flight control}.
\newblock \emph{Proc. Bristol UAV Conference}, pages 159--168, 2002.

\bibitem[Berg et~al.(2011)Berg, Guy, Lin, and Manocha]{Berg2011}
J.~Van~Den Berg, S.~J. Guy, M.~Lin, and D.~Manocha.
\newblock {Reciprocal n-body collision avoidance}.
\newblock In C.~Pradalier, R.~Siegward, and G.~Hirzinger, editors,
  \emph{Robotics research}, pages 3--19. 2011.

\bibitem[Beyeler et~al.(2009)Beyeler, Zufferey, and Floreano]{Beyeler2009}
A.~Beyeler, J.~C. Zufferey, and D.~Floreano.
\newblock {Vision-based control of near-obstacle flight}.
\newblock \emph{Autonomous robots}, 27\penalty0 (3):\penalty0 201--219, 2009.

\bibitem[Blanchard et~al.(2000)Blanchard, Rind, and Verschure]{Blanchard2000}
M.~Blanchard, F.~C. Rind, and P.~F. M.~J. Verschure.
\newblock {Collision avoidance using a model of the locust LGMD neuron}.
\newblock \emph{Robotics and Autonomous Systems}, 30\penalty0 (1):\penalty0
  17--38, 2000.

\bibitem[Boor et~al.(1999)Boor, Overmars, and van~der Stappen]{Boor1999}
V.~Boor, M.~H. Overmars, and A.~F. van~der Stappen.
\newblock {The Gaussian sampling strategy for probabilistic roadmap planners}.
\newblock In \emph{IEEE International Conference on Robotics and Automation},
  pages 1018--1023, 1999.

\bibitem[Braitenberg(1986)]{Braitenberg1986}
V.~Braitenberg.
\newblock \emph{{Vehicles: Experiments in synthetic psychology}}.
\newblock MIT Press, 1986.

\bibitem[de~Vries and Clandinin(2012)]{Vries2012}
S.~E.~J. de~Vries and T.~R. Clandinin.
\newblock {Loom-Sensitive Neurons Link Computation to Action in the Drosophila
  Visual System}.
\newblock \emph{Current Biology}, 22\penalty0 (5):\penalty0 353--362, 2012.

\bibitem[Eichner et~al.(2011)Eichner, Joesch, Schnell, Reiff, and
  Borst]{Eichner2011}
H.~Eichner, M.~Joesch, B.~Schnell, D.~F. Reiff, and A.~Borst.
\newblock {Internal structure of the fly elementary motion detector}.
\newblock \emph{Neuron}, 70\penalty0 (6):\penalty0 1155--1164, 2011.

\bibitem[Fotowat and Fayyazuddin(2009)]{Fotowat2009}
H.~Fotowat and A.~Fayyazuddin.
\newblock {A novel neuronal pathway for visually guided escape in Drosophila
  melanogaster}.
\newblock \emph{Journal of Neurophysiology}, 102\penalty0 (2):\penalty0
  875--885, 2009.

\bibitem[Gabbiani et~al.(2002)Gabbiani, Krapp, Koch, and Laurent]{Gabbiani2002}
F.~Gabbiani, H.~G. Krapp, C.~Koch, and G.~Laurent.
\newblock {Multiplicative computation in a visual neuron sensitive to looming.}
\newblock \emph{Nature}, 420\penalty0 (6913):\penalty0 320--324, 2002.

\bibitem[Hatsopoulos et~al.(1995)Hatsopoulos, Gabbiani, and
  Laurent]{Hatsopoulos1995}
N.~Hatsopoulos, F.~Gabbiani, and G.~Laurent.
\newblock {Elementary computation of object approach by wide-field visual
  neuron.}
\newblock \emph{Science}, 270\penalty0 (5238):\penalty0 1000--1003, 1995.

\bibitem[Karaman and Frazzoli(2011)]{Karaman2011}
S.~Karaman and E.~Frazzoli.
\newblock {Sampling-based algorithms for optimal motion planning}.
\newblock \emph{The International Journal of Robotics Research}, 30\penalty0
  (7):\penalty0 846--894, 2011.

\bibitem[Kavraki et~al.(1996)Kavraki, Svestka, Latombe, and
  Overmars]{Kavraki1996}
L.~E. Kavraki, P.~Svestka, J.~C. Latombe, and M.~H. Overmars.
\newblock {Probabilistic roadmaps for path planning in high-dimensional
  configuration spaces}.
\newblock \emph{IEEE Transactions on Robotics and Automation}, 12\penalty0
  (4):\penalty0 566--580, 1996.

\bibitem[Kushleyev et~al.(2013)Kushleyev, Mellinger, Powers, and
  Kumar]{Kushleyev2013}
A.~Kushleyev, D.~Mellinger, C.~Powers, and V.~Kumar.
\newblock {Towards a swarm of agile micro quadrotors}.
\newblock \emph{Autonomous Robots}, 35\penalty0 (4):\penalty0 287--300, 2013.

\bibitem[Kuwata et~al.(2009)Kuwata, Karaman, Teo, Frazzoli, How, and
  Fiore]{Kuwata2009}
Y.~Kuwata, S.~Karaman, J.~Teo, E.~Frazzoli, J.~P. How, and G.~Fiore.
\newblock {Real-time motion planning with applications to autonomous urban
  driving}.
\newblock \emph{IEEE Transactions on Control Systems Technology}, 17\penalty0
  (5):\penalty0 1105--1118, 2009.

\bibitem[LaValle(1998)]{LaValle1998}
S.~M. LaValle.
\newblock {Rapidly-Exploring Random Trees: A New Tool for Path Planning}.
\newblock 1998.

\bibitem[Lee and Reddish(1981)]{Lee1981}
D.~N. Lee and P.~E. Reddish.
\newblock {Plummeting gannets: a paradigm of ecological optics}.
\newblock \emph{Nature}, 293\penalty0 (5830):\penalty0 293--294, 1981.

\bibitem[Lentink(2014)]{Lentink2014}
D.~Lentink.
\newblock {Bioinspired flight control}.
\newblock \emph{Bioinspiration \& biomimetics}, 9\penalty0 (2), 2014.

\bibitem[Petti and Fraichard(2005)]{Petti2005}
S.~Petti and T.~Fraichard.
\newblock {Safe motion planning in dynamic environments}.
\newblock \emph{Intelligent Robots and Systems}, pages 2210--2215, 2005.

\bibitem[Pines and Bohorquez(2006)]{Pines2006}
D.~Pines and F.~Bohorquez.
\newblock {Challenges Facing Future Micro-Air-Vehicle Development}.
\newblock \emph{Journal of Aircraft}, 43\penalty0 (2):\penalty0 290--305, 2006.

\bibitem[Preuss et~al.(2006)Preuss, Osei-Bonsu, Weiss, Wang, and
  Faber]{Preuss2006}
T.~Preuss, P.~E. Osei-Bonsu, S.~A. Weiss, C.~Wang, and D.~S. Faber.
\newblock {Neural representation of object approach in a decision-making motor
  circuit}.
\newblock \emph{The Journal of Neuroscience}, 26\penalty0 (13):\penalty0
  3454--3464, 2006.

\bibitem[Reichardt(1961)]{Reichardt1961}
W.~Reichardt.
\newblock {Autocorrelation, a Principle for the Evaluation of Sensory
  Information by the Central Nervous System}.
\newblock In W.~A. Rosenblith, editor, \emph{Sensory Communication}, pages
  303--317. MIT Press, Cambridge, MA, 1961.

\bibitem[Rind and Simmons(1992)]{Rind1992}
F.~C. Rind and P.~J. Simmons.
\newblock {Orthopteran DCMD neuron: a reevaluation of responses to moving
  objects. I. Selective responses to approaching objects}.
\newblock \emph{Journal of Neurophysiology}, 68\penalty0 (5):\penalty0
  1654--1666, 1992.

\bibitem[Schiff et~al.(1962)Schiff, Caviness, and Gibson]{Schiff1962}
W.~Schiff, J.~A. Caviness, and J.~J. Gibson.
\newblock {Persistent fear responses in rhesus monkeys to the optical stimulus
  of "looming".}
\newblock \emph{Science}, 136:\penalty0 982--983, 1962.

\bibitem[Sun and Frost(1998)]{Sun1998}
H.~Sun and B.~J. Frost.
\newblock {Computation of different optical variables of looming objects in
  pigeon nucleus rotundus neurons.}
\newblock \emph{Nature neuroscience}, 1\penalty0 (4):\penalty0 296--303, 1998.

\bibitem[Takemura et~al.(2013)Takemura, Bharioke, Lu, Nern, Vitaladevuni,
  Rivlin, Katz, Olbris, Plaza, Winston, Zhao, Horne, Fetter, Takemura, Blazek,
  Chang, Ogundeyi, Saunders, Shapiro, Sigmund, Rubin, Scheffer, Meinertzhagen,
  and Chklovskii]{Takemura2013}
S.~Y. Takemura, A.~Bharioke, Z.~Lu, A.~Nern, S.~Vitaladevuni, P.~K. Rivlin,
  W.~T. Katz, J.~Olbris, S.~M. Plaza, P.~Winston, T.~Zhao, J.~A. Horne, R.~D.
  Fetter, S.~Takemura, K.~Blazek, L.~A. Chang, O.~Ogundeyi, M.~A. Saunders,
  V.~Shapiro, C.~Sigmund, G.~M. Rubin, L.~K. Scheffer, I.~A. Meinertzhagen, and
  D.~B. Chklovskii.
\newblock {A visual motion detection circuit suggested by Drosophila
  connectomics}.
\newblock \emph{Nature}, 500\penalty0 (7461):\penalty0 175--181, 2013.

\bibitem[Vir\'{a}gh et~al.(2014)Vir\'{a}gh, V\'{a}s\'{a}rhelyi, Tarcai,
  Sz\"{o}r\'{e}nyi, Somorjai, Nepusz, and Vicsek]{Viragh2014}
C.~Vir\'{a}gh, G.~V\'{a}s\'{a}rhelyi, N.~Tarcai, T.~Sz\"{o}r\'{e}nyi,
  G.~Somorjai, T.~Nepusz, and T.~Vicsek.
\newblock {Flocking algorithm for autonomous flying robots}.
\newblock \emph{Bioinspiration \& Biomimetics}, 9\penalty0 (2), 2014.

\bibitem[Wang and Frost(1992)]{Wang1992}
Y.~Wang and B.~J. Frost.
\newblock {Time to collision is signalled by neurons in the nucleus rotundus of
  pigeons}.
\newblock \emph{Nature}, 356:\penalty0 236--238, 1992.

\bibitem[Zabala et~al.(2012)Zabala, Polidoro, Robie, and Branson]{Zabala2012}
F.~Zabala, P.~Polidoro, A.~Robie, and K.~Branson.
\newblock {A simple strategy for detecting moving objects during locomotion
  revealed by animal-robot interactions}.
\newblock \emph{Current Biology}, 22\penalty0 (14):\penalty0 1344--1350, 2012.

\end{thebibliography}
\clearpage

\section*{MATHEMATICAL APPENDIX}

\textbf{Proposition 2}
{\it
  Let the relative position and velocity of the
  observed object be $\mathbf{x} \text{ and } \mathbf{v}$
  respectively. Then $\dot{\phi} = \frac{1}{\|\mathbf{x}\|^2}\langle \mathbf{v}^\perp,
  \mathbf{x} \rangle$. In particular, angular velocity scales as one over
  distance squared.
}
\begin{proof} 
  We wish to derive the angular velocity of a point in relative
  motion projecting onto an observer. Place the center of
  projection at the origin, and a particle moving with constant velocity
  $\mathbf{v}=(u,v)$ at position $\mathbf{x_0}=(x_0,y_0)$ at time $0$, as shown
  in Fig. \ref{fig:angvel_derive}. 

  \begin{figure}[ht!]
    \centering
    \includegraphics[width=2in]{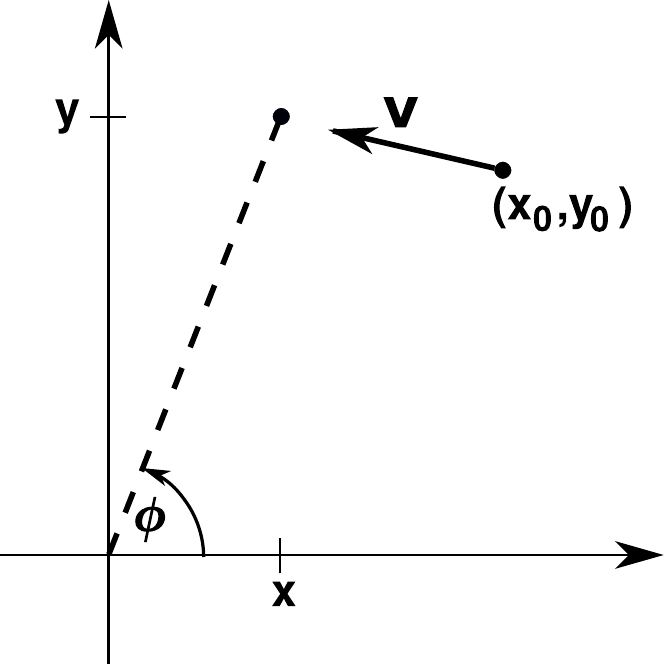}
    \caption{Calculating angular velocity. A particle moving with
      velocity $\mathbf{v}$ projects angular velocity $\dot{\phi}$ on the origin.}
    \label{fig:angvel_derive}
  \end{figure}

  The position at time $t$ equals
  $\mathbf{x}=\mathbf{x_0}+t\mathbf{v}$, and the azimuth of the particle $\phi$
  is such that
  \[
  \tan{\phi} = \frac{\mathbf{y}}{\mathbf{x}} = \frac{y_0+vt}{x_0+ut}.
  \]
  Taking the time derivative on both sides gives
  \[
  \frac{1}{\cos^2{\phi}}\dot{\phi} = \frac{v(x_0+tu)-u(y_0+tv)}{(x_0+ut)^2},
  \]
  and thus
  \begin{align}
    \dot{\phi} &= \frac{cos^2{\phi}(v(x_0+tu)-u(y_0+tv))}{(x_0+tu)^2}\\
    &= \frac{(x_0+tu)^2(v(x_0+tu)-u(y_0+tv))}{D^2(x_0+tu)^2},\label{eq:dotphi}
  \end{align}
  where $D$ is the distance of the particle from the origin. Equation
  \ref{eq:dotphi} follows from the relation
  $\cos{\phi}=\frac{x_0+tu}{D}$ (see Fig.
  \ref{fig:angvel_derive}). Simplifying the RHS yields
  \begin{align}
    \dot{\phi} &= \frac{v(x_0+tu)-u(y_0+tv)}{D^2}\\
    &= \frac{1}{\|\mathbf{x}\|^2}\langle \mathbf{v}^\perp, \mathbf{x} \rangle.\label{eq:angvel_derivation}
  \end{align}
  Since the position $\mathbf{x}$ moves on a line perpendicular to
  $\mathbf{v}^\perp$ we expect the dot product to be constant, and indeed it
  equals $vx_0-uy_0$. Hence angular velocity decays as one over distance
  squared.
\end{proof}

\section*{MOVIE CAPTIONS APPENDIX}
Available online are supplementary Movies S1-S4. The movies show full trials of our simulations for chosen parameter settings. In each movie, ten fly-like vehicles are visible, colored arbitrarily to make tracking the vehicles easy. Whenever two vehicles collide, their body size is temporarily increased. Whenever a vehicle stops, it is surrounded by a colored circle. A red circle means the stop is a False Positive. A green circle indicates a True Positive. In addition, a line segment is drawn from the stopping vehicle to the one (or more) causes of its stop.

\begin{compactitem}
\item[Movie S1: CVA=10$^\circ$, T\textsubscript{GRM}=6rad/s, T\textsubscript{LOOM}=32rad/s.] In this movie, GRM is the stopping mechanism, and the CVA is small. An interesting situation arises at about 00:10, where three vehicles (bright green, blue, and yellow) meet. Green stops due to blue's motion, but unnecessarily. Blue avoids a collision with yellow. Yellow in turn crashes into green. That is because green is already stationary, so the GRM magnitude it evokes on yellow's retina is relatively small, and the small CVA prevents yellow from picking up any strong signals from stationary obstacles.

\item[Movie S2: CVA=70$^\circ$, T\textsubscript{GRM}=6rad/s, T\textsubscript{LOOM}=32rad/s.] This time, the CVA is large. This makes it easy for the flies to detect stationary obstacles on time. There are few collisions, but many unnecessary stops. The collision at 00:22 (blue and bright-green) is a good example of the type of collision that is hard to avoid using GRM detection. The vehicles' relative motion is insignificant, making the evoked GRM signal small.

\item[Movie S3: CVA=10$^\circ$, T\textsubscript{GRM}=32rad/s, T\textsubscript{LOOM}=6rad/s.] In this case the CVA is small and looming is the significant stopping mechanism. Collisions with stationary obstacles are hard to detect, mainly because the CVA is rather small (for example, three of them happen roughly at the same time at 00:04). 

\item[Movie S4: CVA=70$^\circ$, T\textsubscript{GRM}=32rad/s, T\textsubscript{LOOM}=6rad/s.] Looming with large CVA. Encounters such as the light-blue fly stopping at 00:15 emphasize that simple looming mechanism (such as the one used in our simulation) don't know about figure-ground segmentation. The two flies that caused the stop are perceived as one expanding entity on light-blue's eye.

\end{compactitem}

\end{document}